
\documentclass{llncs}  

\usepackage{algorithm}
\usepackage{algorithmic}
\usepackage{graphicx}                                                     
\usepackage{todonotes}



\newcommand{\REMOVED}[1]{{}}

\title{\LARGE \bf Optimal Distributed Searching in the Plane with and without Uncertainty}

\author{Alejandro L\'opez-Ortiz \and Daniela Maftuleac}

\institute{Cheriton School of Computer Science\\
 University of Waterloo\\
            Waterloo, ON N2L 3G1, Canada\\
        \email{\{alopez-o,dmaftule\}@uwaterloo.ca}}%

\begin{document}

\maketitle

\begin{abstract}
We consider the problem of multiple agents or robots searching for a target in the plane.
This is motivated by Search and Rescue operations (SAR) in the high seas which
in the past were often performed with several vessels, and more recently by
swarms of aerial drones and/or unmanned surface vessels.
Coordinating such a search in an effective manner is a non trivial task.
In this paper, we develop first an optimal strategy for searching with $k$ robots
starting from a common origin and moving at unit speed.
We then apply the results from this model to more realistic scenarios
such as differential search speeds, late arrival times to the search effort and low probability
of detection under poor visibility conditions. We show that, surprisingly,
the theoretical idealized model still governs the search with certain
suitable minor adaptations.
%
%
\end{abstract}

\section{Introduction}

Searching for an object on the plane with limited visibility is often
modelled by a search on a lattice. In this case it is assumed that
the search agent identifies the target upon contact.
An axis parallel lattice induces the Manhattan or $L_1$ metric on the plane.
One can measure  the distances traversed by the search agent
or robot using this metric.
Traditionally, search strategies are analysed using the competitive
ratio used in the analysis of on-line algorithms. For a single
robot the competitive ratio
is defined as the ratio between the distance traversed by
the robot in its search for the target and the length of the shortest path
between the starting position of the robot and the target.
In other words, the competitive ratio measures the detour of the search
strategy as compared to the optimal shortest route.

In 1989, Baeza-Yates et al. \cite{Alpern,Baeza,Baeza2} proposed an optimal strategy
for searching on a
lattice with a single searcher with a competitive
ratio of $2n+5+\Theta(1/n)$ to find a point at an unknown distance $n$ from the
origin.
The strategy follows a spiral pattern exploring $n$-balls in increasing
order, for all integer $n$.
This model has been historically used for search and rescue operations in the high seas
where a grid pattern is established and search vessels are dispatched
in predetermined patterns to search for the target \cite{CoastGuard,USCoastGuardAdd}.

Historically, searches were conducted using a limited number (at most a handful)
of vessels and aircrafts. This placed heavy constraints in the type of solutions
that could be considered, and this is duly reflected in the modern search and rescue
literature \cite{IMO1,IMO2,IMO3,Koopman}.

However, the comparably low cost of surface or underwater unmanned vessels allows
for searches using hundreds, if not thousands of vessels.\footnote{For example, the cost of an unmanned
search vehicle is in the order of tens of thousands of dollars \cite{ClearPath} which can be amortized
over hundreds of searches, while the cost of conventional
search efforts range from the low hundred thousands of dollars up to sixty million dollars for high profile
searches such as Malaysia Airlines MH370 and Air France 447.
This suggests that somewhere in the order of a few hundred to a few tens of thousands of robots can
be realistically brought to bear in such a search.} %
Motivated by this consideration,
we study strategies for searching optimally in the plane with a given, arbitrarily large number of robots.

Additionally, the search pattern reflects probabilities of detection and discovery according to
some known distribution that reflects the specifics of the search at hand. For example, the search
of the \emph{SS Central America} reflected the probabilities of location using known survivor
accounts and ocean currents. These probabilities were included in the design stage of the search
pattern, with the ship and its gold cargo being successfully recovered in 1989 after more than 130 years
of previous unsuccessful search efforts~\cite{Stone}.



In this paper, we address the problem of searching in the plane with multiple agents under probability of detection and discovery.
We begin with the theoretical model for two and four robots of L\'opez-Ortiz and
Sweet \cite{LopezOrtizSweet} that abstracts out issues of visibility and differing
speeds of searchers.
Searching for an object on the plane with limited visibility is commonly
modelled by a search on a lattice. Under this setting, visual contact on
the plane corresponds to identifying the target upon contact on the grid.



\subsection{Summary of Results and Structure of the Paper}

We construct a theoretical model and give an optimal strategy for searching with $k$ robots
with unit speed, starting simultaneously from a common origin.

We then progressively enrich this model with practical parameters, specifically different search speeds,
different arrival times to the search effort and poor visibility conditions. We show
that the principles from the theoretical solution also govern the
more realistic search scenario under these conditions subject to a few minor adaptations.
Lastly, we deal with cases with a varying probability of location as well as
probability of detection (POD).



We first consider the case where all searchers start from a common point which we
term the origin, and second, when they start from arbitrary points on the lattice.

Initially we consider the case where all $k$ searchers move at the same speed and
give a strategy for finding a target
with $k=4r$ searchers with a competitive ratio of $2n/k+5/k$, as well as a
lower bound for $k$ searchers of $2n/k+5/k$ for general $k$, which matches the
upper bound up to an $o(1/k)$ term.

This is then generalized to any number of robots (not just multiples of four)
and using the same ideas, we show that the techniques developed also generalize
to searchers with various speeds. Lastly, we show that the proposed theoretical strategy also governs a search under
actual weather conditions, in which there is a non-negligible probability of
the target being missed in a search. We use tables from the extensive literature
on SAR (Search-and-Rescue) operations to conduct simulations and give scenarios
in which the proposed strategy can greatly aid in the quest for a missing
person or object in a SAR setting~\cite{IMO1,IMO2,IMO3}.

\begin{figure}
\vspace{-6cm}
\noindent%
\begin{minipage}[b]{.32\linewidth}
\centering\mbox{\includegraphics[width=1.8\textwidth]{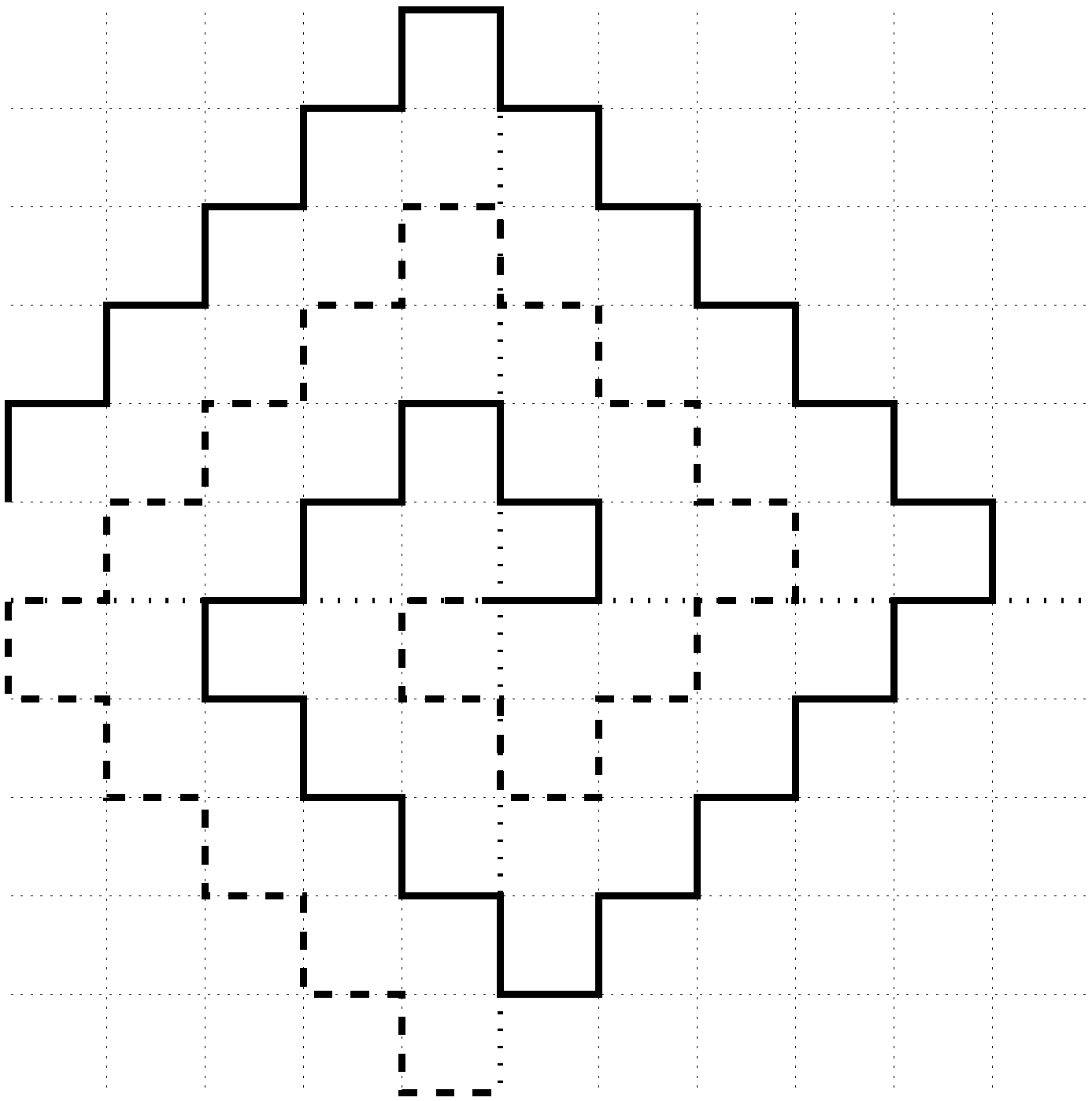}}
 \caption{ Search with two robots.} \label{2robots}
\end{minipage}\hspace{2cm}
\begin{minipage}[b]{.65\linewidth}
\centering\mbox{ \includegraphics[width=1.2\textwidth]{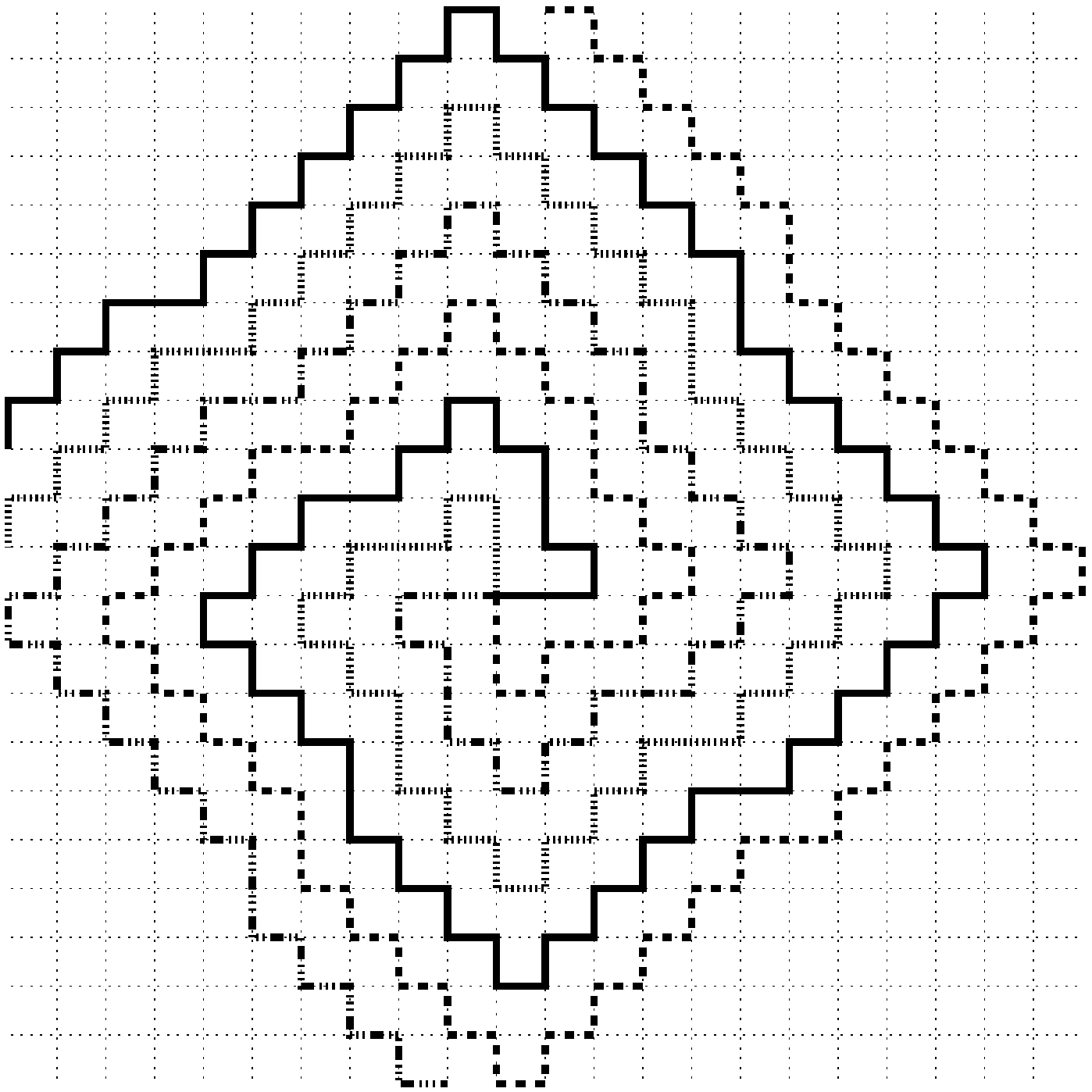}}
   \caption{Search with four robots.} \label{4robots}
\end{minipage}
\end{figure}

\section{Parallel Searching}

L\'opez-Ortiz and Sweet~\cite{LopezOrtizSweet} consider the case of searches using two and four robots
whose search path is shown in Figs.~\ref{2robots} and \ref{4robots}.
In this case, the robots move in symmetric paths around the origin and prove
the following theorem.
\begin{theorem} \cite{LopezOrtizSweet}
Searching in parallel with $k=2,4$ robots for a point at an unknown distance
$n$ in the lattice is $(2n+4+4/(3n))/k+o(1/n^2)$ competitive.
\end{theorem}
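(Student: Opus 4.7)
The plan is to prove the theorem by constructing an explicit symmetric search pattern and carefully analyzing its worst-case competitive ratio. I would start by describing the $k=4$ strategy: place one robot in each quadrant of the plane, and have each robot sweep its quadrant by visiting lattice points in order of their $L_1$-distance from the origin. The robots move synchronously, so that all four finish exploring the $L_1$-sphere of radius $m$ before any of them begins to explore the sphere of radius $m+1$. By the four-fold symmetry every lattice point is covered, and the first robot to reach the target is the one whose quadrant contains it. The $k=2$ scheme is structurally identical, with each robot responsible for two adjacent quadrants and performing the analogous sweep on a doubled arc.

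The proof then reduces to upper-bounding the distance travelled by a single robot through the moment it visits the worst-case target on sphere $n$. I would decompose this distance into three contributions: (i) the cumulative cost of sweeping the quadrant arcs of all spheres $m < n$, (ii) the cost of the unit-step transitions between consecutive arcs, and (iii) the partial cost of sweeping arc $n$ up to the position of the target. The zigzag traversal of the arc at radius $m$ within a quadrant has $L_1$-length roughly $2m$, so summing yields a contribution of order $n^2$, while the transitions and the partial final arc each contribute $O(n)$. Combining these terms gives a worst-case per-robot distance of the form $(2n^2+4n+4/3)/k+O(1/n)$; dividing by the optimal distance $n$ produces the stated competitive ratio.

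The improvement of the leading constant from $+5$ in Baeza-Yates' single-robot spiral to $+4$ in the parallel scheme stems from the fact that each robot can begin its outward sweep immediately in its assigned region and thereby avoid the boundary-crossing detours the single spiral incurs when circumnavigating the origin; a naive division of the single-robot cost among $k$ robots would only achieve $+5/k$. The main technical obstacle will be pinning down the lower-order term $4/(3n)$ exactly, rather than merely $O(1/n)$: this requires an amortized analysis over a small finite set of worst-case phases determined by the direction in which the robot enters the final arc and by where on that arc the target may lie. Once these phases are enumerated and the worst among them is shown to produce the stated coefficient, all remaining lower-order effects are absorbed into $o(1/n^2)$, and the same bookkeeping carries over to $k=2$.
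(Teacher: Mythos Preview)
The present paper does not prove this theorem: it is quoted from L\'opez-Ortiz and Sweet, with only Figures~\ref{2robots} and~\ref{4robots} indicating the shape of the search paths and no accompanying argument. There is therefore no in-paper proof against which to compare your proposal.

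Independently of that, your proposed strategy has a real gap at the level of the leading term, not just the lower-order ones. You have each robot sweep its quadrant arc by arc, so that ``all four finish exploring the $L_1$-sphere of radius $m$ before any of them begins to explore the sphere of radius $m+1$,'' and you cost the arc at radius $m$ at ``roughly $2m$.'' But consecutive lattice points on an $L_1$-sphere are at distance~$2$ from one another, so a robot that handles its share of $m$ points on sphere $m$ in this way spends two steps per newly visited point; its total distance through the last point of sphere $n$ is then $\sum_{m\le n}(2m-1)\approx n^{2}$, giving competitive ratio about $n$ rather than $(2n+4)/4\approx n/2$. The actual $k=2,4$ paths---like the Baeza--Yates single-robot spiral they refine---achieve the stated bound precisely because they \emph{interleave} adjacent spheres so that almost every step lands on a fresh lattice point. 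This is also visible in the lower-bound calculation of Theorem~\ref{t:lower}, where the balancing value comes out to $g(n)=2n+(4-k)n/(3n+1)\neq 0$: at optimum, roughly $2n$ points of the $(n{+}1)$-sphere are already visited when the $n$-sphere is finally completed, not zero. Until the underlying path is changed to one with the correct leading coefficient, the phase analysis you outline for pinning down the $4/(3n)$ term cannot get off the ground.
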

This is in fact optimal for the two and four robots case, as the next theorem shows.
\begin{theorem}\label{t:lower}
Searching in parallel with $k$ robots for a point at an unknown distance
$n$ in the lattice requires at least $(2n^2+4n+4/3)/k+\Omega(1/n)$
 steps, which implies a competitive ratio of at least $(2n+4+4/(3n))/k+\Omega(1/n^2)$.
\end{theorem}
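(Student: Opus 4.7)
The plan is to establish this lower bound by a counting argument combined with an adversary's target placement. Fix any deterministic $k$-robot strategy in which all robots start at the common origin and move at unit speed on the integer lattice, and let $T$ denote the worst-case number of steps required against a target at distance $n$. The adversary inspects the trajectories and places the target at the point of distance exactly $n$ that is visited last (among such points) by the $k$ robots; consequently, $T$ is at least the time by which the robots have collectively covered every point of the $L_1$-ball of radius $n$, since any uncovered point at distance $n$ is a viable target.

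The first, area-based, step of the counting argument observes that by time $T$ each robot's walk of length $T$ visits at most $T+1$ distinct lattice cells, so the $k$ robots together visit at most $kT+1$ distinct points (subtracting $k-1$ for the shared origin). Forcing this to cover all $2n^2+2n+1$ points of the ball yields the coarse bound $T \geq (2n^2+2n)/k$. To recover the additional $(2n)/k$ in the stated inequality, I would refine via a parity argument: two-color the lattice by $(x+y) \bmod 2$ so that each lattice step flips the color. A walk of length $T$ starting at the origin visits at most $\lceil (T+1)/2 \rceil$ distinct points of one color class and at most $\lfloor (T+1)/2 \rfloor$ of the other. Since the ball of radius $n$ contains $(n+1)^2$ points of its majority color, and the $k$ walks must cover all of them, the tighter inequality $k \lceil (T+1)/2 \rceil \geq (n+1)^2$ gives $T \geq (2n^2+4n)/k - O(1)$.

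For the precise $(4/3)/k$ correction and the $\Omega(1/n)$ residual, I would lift the Baeza-Yates et al.\ \cite{Baeza} single-robot lower bound of $2n^2+4n+4/3+\Omega(1/n)$ to the $k$-robot setting. The constant $4/3$ in the single-robot case arises from a careful amortized accounting of the ``turnaround'' edges that any walk must traverse to pass between expanding layers of the spiral. Because the $k$ robots start at a common origin, no coordination can avoid this per-layer overhead, and the charge must be paid collectively, contributing $(4/3)/k$ after division by $k$; the $\Omega(1/n)$ slack accounts for the integrality of $T$ and the ceilings in the parity counting. Dividing by the target distance $n$ converts the step bound into the stated competitive ratio $(2n+4+4/(3n))/k+\Omega(1/n^2)$. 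I expect the principal obstacle to be this final refinement: the leading $2n^2/k+4n/k$ terms fall out cleanly from area and parity, but tracking the exact constant $4/3$ requires adapting the potential-function argument of Baeza-Yates et al.\ to the multi-walk setting while ruling out any joint coordination among the $k$ robots that could save beyond the naive $1/k$ split.
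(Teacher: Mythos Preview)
Your area-and-parity counting cleanly recovers the two leading terms $(2n^2+4n)/k$, and that part is sound. The gap is in the final $(4/3)/k$ refinement, and it is not merely a matter of care: the mechanism you describe does not produce that constant.

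The key issue is that your entire argument fixes a single radius $n$ and lower-bounds the time to cover the $n$-ball. For a fixed $n$ this can never beat roughly $(2n^2+4n+1)/k$: the parity count gives $k\lceil (T+1)/2\rceil-(k-1)\ge (n+1)^2$, i.e.\ $T\ge (2n^2+4n+2)/k+1-2/k-1$, and no ``turnaround edge'' accounting will push a covering lower bound for one ball past that. The constant $4/3$ does \emph{not} come from per-layer overhead that one can amortize and then divide by $k$; it comes from the fact that the strategy is oblivious to $n$, so the adversary may choose between two consecutive radii. Concretely, the paper introduces $g(n)$, the number of points on the $(n{+}1)$-sphere already visited when the $n$-ball is finished, and observes $A(n)=2n^2+2n+g(n)$ together with $A(n+1)\ge A(n)+2\bigl(4(n{+}1)-g(n)\bigr)-k$, since the remaining $(n{+}1)$-sphere points are pairwise at $L_1$-distance $\ge 2$. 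Small $g(n)$ makes $A(n)/n$ small but forces $A(n{+}1)/(n{+}1)$ large, and vice versa; equating the two and solving gives $g(n)=2n+(4-k)n/(3n+1)$ and hence $A(n)=2n^2+4n+4/3+\Theta(1/n)$. The $4/3$ and the $\Omega(1/n)$ tail both fall out of this balance, not from integrality of $T$.

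So to close the gap you must abandon the single-$n$ framing and argue over two (or more) consecutive radii simultaneously, introducing a trade-off parameter like $g(n)$ that couples them. Your parity idea could still be used to derive the relation $A(n)\ge 2n^2+2n+g(n)$ more directly, but it cannot replace the balancing step.
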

\begin{proof}
Following the notation of \cite{LopezOrtizSweet},
let $A(n)$ be the combined total distance traversed by all robots
up and until the last point at distance $n$ is visited.
 We claim that in the worst case
 $A(n) \geq 2n^2+5n+3/2$, for some $n>1$.
Define $g(n)$ as the number of points visited on the $(n+1)$-ball
before the last visit to a point on the $n$-ball.

First, note that there are $2n^2+2n+1$ points in the
interior of the closed ball of radius $n$ and that visiting
any $m$ points requires at least $m-1$ steps. Hence,
$A(n)= 2n^2+2n+g(n).$
If $g(n)$ points have already been visited, this means that after the last point at
distance $n$ is visited, there remain $4(n+1)-g(n)$ points to visit in the
$n$-ball. Now, visiting $m$ points in a ball requires at least $2m-1$ steps
with one robot, and $2m -k$ with $k$ robots. Thus visiting the remaining
points requires at least $2(4(n+1)-g(n))-k$ steps. Hence,
$A(n+1)= A(n)^2+2(4(n+1)-g(n))-k$
as claimed. Now we consider the competitive ratio at distance
$n$ and $n+1$ for each of the robots as they visit the last
point at such distance in their described path. We denote by
$A_i(n)$ the portion of the points $A(n)$ visited by the
$i$th robot. Hence, the competitive ratio for robot $i$ at
distances $n$ and $n+1$ is given by $A_i(n)/n$ and $A_i(n+1)/(n+1)$.
Observe that $\sum_{i=1}^k A_i(n)=A(n)$, for any $n$ and hence, there exist $i$ and
$j$ such that $A_i(n)\geq A(n)/k$ and $A_j(n+1)\geq A(n+1)/k$.
Lastly, the competitive ratio, as a worst case measure is minimized
when $A_i(n)/n=A_j(n+1)/(n+1)$ or equivalently when
$A(n)/n =A(n+1)/(n+1)$
with solution $g(n)=2n+(4-k)n/(3n+1).$
Substituting in the expression for $A(n)$, we obtain
$A(n)=2n^2+4n+(4-k)n/(3n+1)=2n^2+4n+4/3+\Theta(1/n)$
with a robot searching, in the worst case
at least $A(n)/k$ steps for a competitive ratio of
$$\frac{2n+4+4/(3n)}{k}+\Omega(1/n^2).$$
\qed
\end{proof}

\noindent%

\section{Search Strategy}

\subsection{Even-work strategy for parallel search with $k=4r$ robots}
A natural generalization of the $k=2$ and $k=4$ robot cases, as
shown in Figs. \ref{2robots} and \ref{4robots} suggest a spiral strategy
consisting of $k$ nested spirals searching in an outward fashion.
However, because the pattern must replicate or echo the shape of inner
paths, all attempts lead to an unbalanced distribution of the last search
levels and thus a suboptimal strategy.
A better competitive ratio gives us the strategy described in this section that
we call \emph{even-work strategy}.
Each of the $r$ robots covers an equal region of a quadrant using the pattern
in Fig.~\ref{step2}. The entire strategy consists of four rotations
of this pattern, one for each quadrant in the plane.
\begin{theorem}\label{t:upper}
Searching in parallel with $k=4r$ robots for a point at an unknown distance $n$ in the
lattice has the asymptotic competitive ratio of at most
$(2n+7.42)/k.$
\end{theorem}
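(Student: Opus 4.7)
The plan is to exploit the four-fold rotational symmetry of the even-work strategy to reduce the analysis to a single quadrant with $r$ robots, and then further to a single robot's path within its assigned sub-region. By design, each robot is responsible for an (almost) equal share of the lattice points within its quadrant, organized so that points of the closed $j$-ball are all visited before any point outside it. This is what makes the competitive ratio scale like $1/k$: the total work is roughly partitioned into $k$ equal pieces and the robot that finds the target performs only one such piece.

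First, I would give a precise description of the partition. In each quadrant, the $r$ robots cover disjoint wedge-like sub-regions whose union is the quadrant, and each robot follows the pattern illustrated in Fig.~\ref{step2}: a back-and-forth traversal of its sub-region that first exhausts the portion inside the $j$-ball before moving outward, for successive $j=1,2,\dots$. The global strategy is the four rotations of this quadrant pattern. Here the key combinatorial fact I would prove is that the numbers of lattice points each of the $r$ sub-regions contains within the $n$-ball differ by only $O(1)$, independently of $n$.

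Second, I would set up the accounting. Let $A(n)$ be the total number of steps taken by all $k$ robots by the time the last lattice point at distance $n$ has been visited. Using that the closed $n$-ball contains $2n^2+2n+1$ points, that each such point is visited by exactly one robot, and that a connected visit of $m$ points costs at least $m-1$ steps, I would upper bound $A(n)$ by $2n^2 + cn + O(1)$ by adding, on top of the trivial lower bound, the inter-annulus and inter-sub-region \emph{transition overhead} carried by each robot. The even-work design then ensures that the robot which discovers the target has traveled at most $A(n)/k$ plus a small additive term accounting for the longest single traversal inside the $n$-annulus. Dividing by $n$ yields a competitive ratio of the form $(2n+c')/k$.

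The main obstacle is pinning down the constant $7.42$. This is the quantitative, not the qualitative, heart of the argument and splits into two effects: (a) the \emph{intra-quadrant overhead}, where a robot finishing its share of the $j$-ball may need several unproductive steps to reach its first assigned point in the $(j+1)$-annulus; and (b) the \emph{imbalance overhead}, bounding how much a single robot's load can exceed $A(n)/k$ because the sub-region shares differ by an additive constant. Each of these contributes at most an $O(1)$ additive cost per annulus, summing to an $O(n)$ contribution in $A(n)$; the figure $7.42$ arises from summing these constants tightly over all annuli up to radius~$n$, combined with the exact count of points per sub-region. The leading $2n/k$ term then matches the lower bound from Theorem~\ref{t:lower}, and only the additive constant leaves the small gap between $4+4/(3n)$ and $7.42$.
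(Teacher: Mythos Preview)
Your outline is structurally sound and would yield a bound of the form $(2n+c)/k$, but it takes a different route from the paper and leaves the key step---the derivation of the specific constant $7.42$---as a black box. You propose to bound the total work $A(n)\le 2n^2+cn+O(1)$ directly by summing per-annulus transition and imbalance overheads and then divide by $k$. The paper instead argues \emph{per point on the boundary annulus}: from the lower bound it extracts a best-case overhead of $5$ and a worst-case overhead of $8$ per ball (total $13$), giving a worst-case work-per-boundary-point of $13/5$; it compares this to the lower-bound work-per-boundary-point of $7/4$ (since covering the $4n$ points of the $n$-annulus costs $7n$ in the lower bound), obtains the inflation factor $(13/5)/(7/4)\approx 1.486$, and multiplies the lower-bound constant $5$ by it to get $7.428$. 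Your direct-accounting scheme is the more conventional framework, and in principle could recover the same constant, but as written your two overhead sources are only asserted to be $O(1)$ per annulus; you would still have to carry out the case analysis of Fig.~\ref{UB} (best case $5$, worst case $8$) and the load-balancing of Fig.~\ref{int-7r} to arrive at $7.42$ rather than some unspecified $c$. In short: the qualitative skeleton is correct, but the quantitative heart---which you yourself flag as ``the main obstacle''---is precisely where the paper's argument and yours diverge, and your proposal does not yet contain the calculation that produces the claimed constant.
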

\begin{proof}
We know the lower bound for asymptotic competitive ratio is $2n/k + 5/k$. We want
to describe the upper bound of even-work strategy of $2n/k + 7.428/k$.
\begin{figure}[h]\centering
\includegraphics[width=0.5\textwidth]{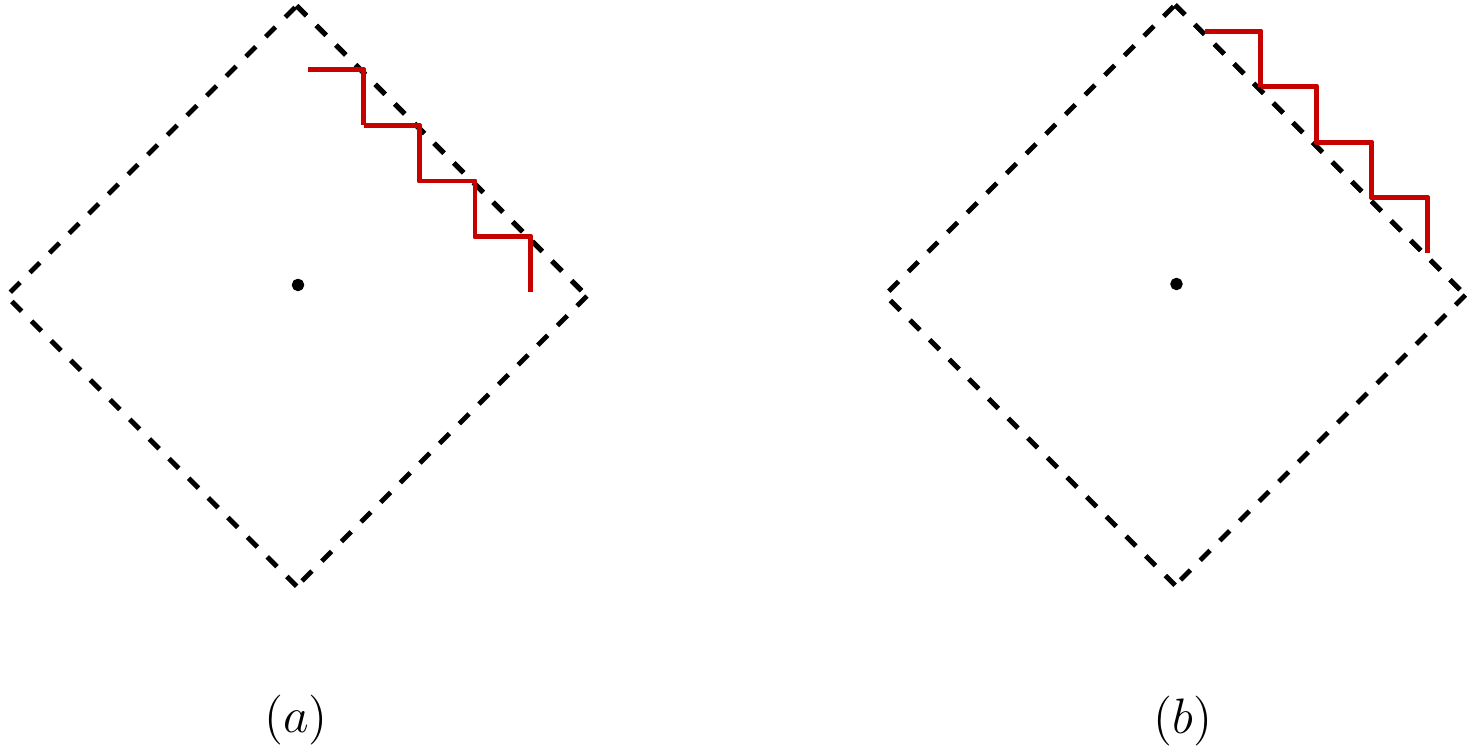}
\caption{Covering the $n$-ball: (a) best case scenario (b) worst case scenario.} \label{UB}
\end{figure}
From the lower bound, we can deduce that for each ball the number
of extra points covered by the robots is 5 in the best case (Fig. \ref{UB} (a)).
In the worst case, the robots perform 8 units of extra amount of work (Fig. \ref{UB} (b)).
So in order to cover all the points on a ball, the robots traverse a total of 13 units of extra distance.
Thus, 13/5 = 2.6  is an upper bound on the amount of work per point. When robots move from the
ball of radius $n$ to $n+1$, a single robot must pick up the extra point to be
explored. We balance the distribution of the new work as shown in Fig.~\ref{int-7r}.
\begin{figure}\centering
\includegraphics[width=0.9\textwidth]{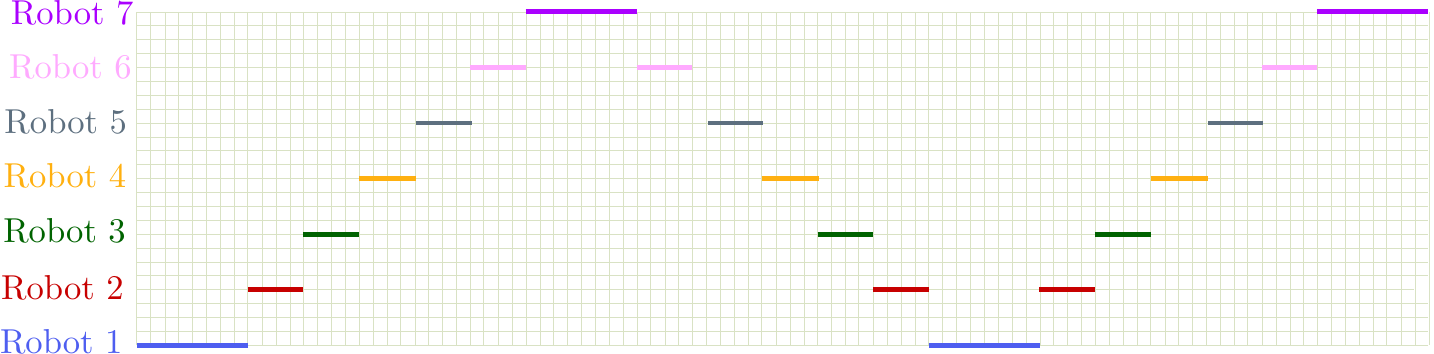}
\caption{Allocation of additional search tasks as radius increases (x-axis). The
y-axis indicates which robot is activated in that ball.} \label{int-7r}
\vspace{-0.6cm}
\end{figure}
After covering the ball $n$, we have $2n^2+2n$ points covered, this is if we
had the same amount of work for each point.
The lower bound gives us $2n^2+5n$ amount of work to cover all the points
at distance $n$.
When we look at the last $4n$ points (on the $n$ ball), for each of the $4n$ points, we have $3n$ work.
Thus, 7/4 amount of work per point (lower bound).
From where we get the relation:
$\frac{\lceil n/k\rceil (1+8/5)}{\lceil n/k\rceil (1+3/4)}= \frac{13/5}{7/4}=1.486.$
and $5\cdot 1.486 = 7.428.$\qed
\end{proof}


\subsection{Parallel search with any number of robots}

This case illustrates how the abstract search strategy
for a number of robots multiple of four can readily be adapted to
an arbitrary number of robots.
Let $k$ be the number of robots, where $k$ is not necessarily
divisible by 4.

\begin{figure}[h]\centering
 \includegraphics[width=0.6\textwidth]{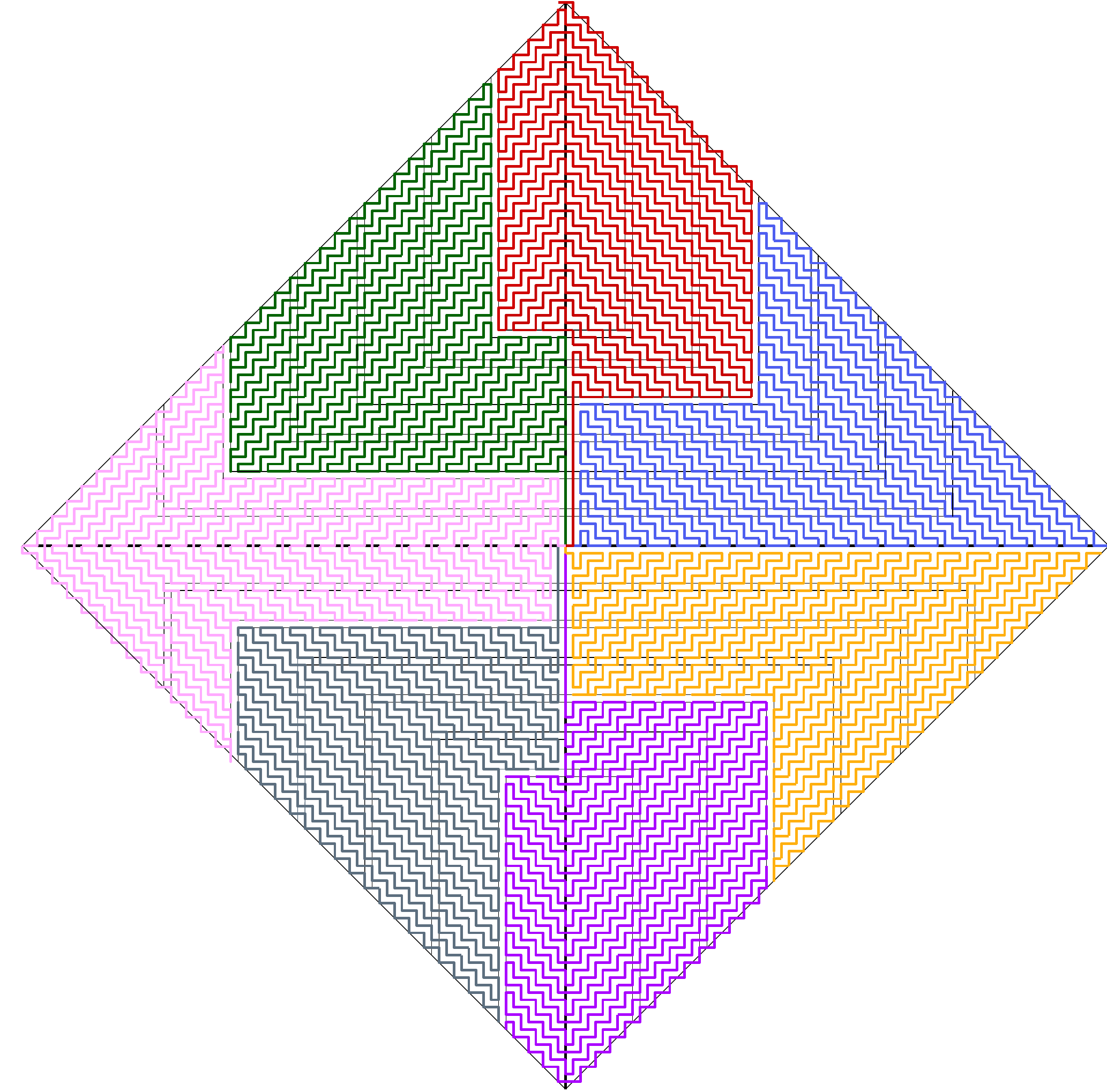}
\caption{Parallel search with 7 robots.} \label{7-rob}
 \end{figure}

We first design the strategy for $4k$ robots obtaining 4 times
as many regions as robots. We then assign to every robot 4
consecutive regions as shown in Fig.~\ref{7-rob} for the case
$k=7$. Observe that now some of the regions span more than one
quadrant and how the search path for each robot transitions
from region to region while exploring the same ball of radius
$n$ in all four regions assigned to it. Observe that from Theorems \ref{t:lower} and
\ref{t:upper},
it follows that this strategy searches the plane optimally as well.

\begin{theorem}\label{thm:any-k-robots}
Searching in parallel with $k$ robots for a point at an unknown distance in the lattice has an asymptotic competitive ratio of at most $(2n+7.42)/k$.
\end{theorem}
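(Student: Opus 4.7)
The plan is to bootstrap from the even-work strategy of Theorem~\ref{t:upper}, which handles the case of $4r$ robots. First, I would invoke the $4r$-robot construction of Section~3.1 with $r = k$, producing a strategy that partitions the plane into $4k$ angular regions (four per quadrant, each quadrant further subdivided among $k$ strips). By Theorem~\ref{t:upper}, if $4k$ physical robots were deployed according to this partition, the resulting competitive ratio would be at most $(2n+7.42)/(4k)$.

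Second, I would group the $4k$ regions into $k$ blocks of four consecutive regions each, arranged so that the four regions within a single block form a contiguous angular arc around the origin (possibly spanning more than one quadrant, as illustrated for $k=7$ in Figure~\ref{7-rob}), and assign one of the $k$ physical robots to each block. Each such robot sweeps the ball of radius $n$ across all four of its assigned regions before moving on to the ball of radius $n+1$. Since each robot now performs the work that four robots previously shared in the $4k$-robot strategy, the maximum distance travelled grows by a factor of $4$, yielding competitive ratio at most $4\cdot(2n+7.42)/(4k) = (2n+7.42)/k$, modulo transition overhead.

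The main obstacle is bounding the cost of transitioning between the four regions of a single block. Because consecutive regions meet along the perimeter of the current ball, each transition can be implemented with only $O(1)$ extra steps along the boundary, so the total overhead accumulated by a robot across all balls up to radius $n$ is $O(n)$. Once divided by $n$ this contributes only lower-order terms, and is absorbed into the asymptotic bound. The critical structural point is the choice of contiguous arcs: if the four regions in a block were not angularly adjacent, a robot would have to jump across the plane on every ball transition, incurring $\Omega(n)$ cost per ball and destroying the bound. Combining this upper bound with the lower bound $(2n+5)/k+\Omega(1/n^2)$ from Theorem~\ref{t:lower} then establishes asymptotic optimality for arbitrary $k$, not just multiples of four.
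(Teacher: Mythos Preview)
Your proposal is correct and matches the paper's own argument essentially step for step: the paper also instantiates the $4k$-robot even-work strategy, assigns four consecutive regions to each of the $k$ robots (noting that a block may span more than one quadrant, as in Fig.~\ref{7-rob}), has each robot sweep all four of its regions at the current radius before advancing, and then appeals to Theorems~\ref{t:lower} and~\ref{t:upper}. If anything, your treatment of the transition overhead and the contiguity requirement is more explicit than what the paper provides.
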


\section{From theory to practice}

\subsection{The Search Strategy}

In Fig.~\ref{step2}, we show the search strategy
with $k=4r$ robots. 
The snapshots are taken at search time
$t=40,80,160$ and
$t=260$. Since the robots traverse at unit speed,
the total area explored by each robot is
$t$ while the combination of all robots is $kt$.
\begin{figure}[h]\centering
\begin{tabular}{cccc}
 \includegraphics[width=0.113\textwidth]{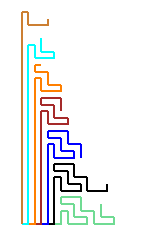}
& \includegraphics[width=0.149\textwidth]{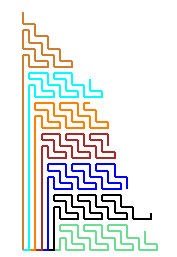}
& \includegraphics[width=0.22\textwidth]{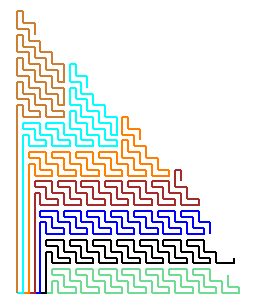}
& \includegraphics[width=0.37\textwidth]{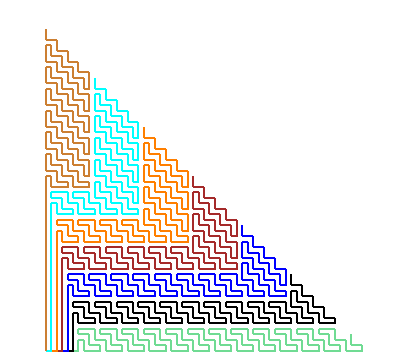}
\end{tabular}
\caption{Parallel search with $r=7$ robots at time $t=40,80, 160, 240$.} \label{step2}
 \end{figure}



\noindent While we envision the swarm of robots being usually deployed
from a single vessel and as such all of them starting from the
same original position, for certain searches additional resources
are brought to bear as more searchers join the search-and-rescue
effort. In this setting we must consider an agent or agents joining
a search effort already under way.

\begin{theorem}
There exists an optimal strategy for parallel search with $k$ initial
robots starting from a common origin and later adding new robots to the search.
\end{theorem}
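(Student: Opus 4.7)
The plan is to reduce the problem to the fixed-fleet even-work strategy of Theorem~\ref{thm:any-k-robots}. Suppose $k$ robots begin the search at time $0$ and, at some time $t_0$, a batch of $k'$ additional robots is deployed from the origin. Let $n_0$ denote the radius of the ball completely covered by time $t_0$; since the $k$-robot even-work strategy performs $\Theta(n^2)$ total work to cover the $n$-ball, we have $n_0 = \Theta(\sqrt{kt_0})$.

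First, I would describe the reconfiguration at time $t_0$. The $k'$ new robots are dispatched outward from the origin along $k'$ reserved radial corridors through the already-covered disc, reaching the search frontier at time $t_0 + n_0$. Meanwhile, the original $k$ robots continue on their assigned even-work paths. At time $t_0 + n_0$ the entire fleet of $k+k'$ robots transitions to the $(k+k')$-robot strategy of Theorems~\ref{t:upper} and~\ref{thm:any-k-robots}, applied to the annular region outside the currently-explored ball. This is possible because the even-work pattern is built on a near-equal angular partition of each $n$-ball into sectors, and those sectors can be translated outward so that their inner boundary coincides with the previously-explored disc rather than the origin.

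Next, I would analyze the competitive ratio. For a target at unknown distance $n$, two cases arise. If the target is found before time $t_0 + n_0$, the $k$-robot strategy is still in effect and Theorem~\ref{thm:any-k-robots} gives ratio $(2n+7.42)/k$. Otherwise, after the transition the remaining work is $2n^2 - 2n_0^2 + O(n)$ split among $k+k'$ robots, so each new robot performs at most $n_0 + (2n^2 - 2n_0^2)/(k+k') + O(n/(k+k'))$ units of travel before the target is located. For $n \gg n_0$ this yields the asymptotic competitive ratio $(2n+O(1))/(k+k')$, matching the bound of Theorem~\ref{thm:any-k-robots} for a fleet of size $k+k'$. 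The construction iterates in the obvious way for an arbitrary sequence of arrival events $t_1 < t_2 < \cdots$, with cumulative fleet $K_j = k + k_1 + \cdots + k_j$ and frontier radius $n_j$ at each event.

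Finally, I would argue optimality by adapting the lower bound of Theorem~\ref{t:lower}: since the $k'$ late arrivals must travel at least $n_0 - o(n_0)$ before they can contribute to any point outside the already-covered disc (the adversary may place the target just beyond the current frontier), no strategy can do asymptotically better than the scheme above. The main obstacle I expect is the transitional regime in which $n$ is comparable to $n_0$; here the additive $n_0$ transit cost is not negligible, and one must argue by cases that either the new robots have not yet reached the frontier (so only the original $k$ contribute and the $k$-robot bound of Theorem~\ref{thm:any-k-robots} applies directly), or $n$ is sufficiently larger than $n_0$ that the transit cost is absorbed as a lower-order term in the competitive ratio.
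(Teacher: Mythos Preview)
Your approach is essentially the same as the paper's: send the newly arrived robots out to the current frontier, switch the whole fleet to the larger even-work pattern, and observe that the transition cost is bounded by the radius (diameter) of the already-explored ball and hence a lower-order term in the competitive ratio. The paper's proof is a three-sentence sketch of exactly this idea; your version fills in the case analysis, the iteration over multiple arrival events, and the lower-bound justification that the paper leaves implicit.
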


\begin{proof}
 In this case we can compute
the exact time at which the additional searcher will meet up with
the explored area and have the search agents switch from a $k$
robot search pattern to a $k+1$ search pattern. The net cost of
this transition effort is bounded by the diameter of the
$n$ ball at which the extra searcher joins, with no ill effect
over the asymptotic competitive ratio. Hence, the search is
asymptotically optimal.
\qed
\end{proof}


\noindent Parallel search with $k$ robots with different speeds
is another case which nicely illustrates how the abstract search strategy
for robots with equal speed can be readily adapted to robots of
varying speeds.
\begin{theorem}
There exists an optimal strategy for parallel search with
$k$ robots with different speeds.
\end{theorem}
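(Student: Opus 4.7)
The plan is to reduce the varying-speed case to the equal-speed even-work strategy by partitioning the work proportionally to individual speeds. Write $V = \sum_{i=1}^k v_i$ for the aggregate speed, and assume without loss of generality that $v_1 \leq v_2 \leq \cdots \leq v_k$. On each $n$-ball there are $4n$ boundary points; assign robot $i$ an arc containing approximately $(v_i/V)\cdot 4n$ of these points. Since robot $i$ moves at speed $v_i$, it completes its share of the $n$-ball in time $\approx 4n/V$, independent of $i$. All robots therefore advance in lock-step from ball $n$ to ball $n+1$, mirroring the synchronization of the even-work pattern illustrated in Fig.~\ref{step2}.

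The first step is to define the sector assignment precisely. Scaling the region partition used in Fig.~\ref{7-rob} so that robot $i$ owns a wedge of area fraction $v_i/V$ (subject to the lattice rounding already used in Theorem~\ref{thm:any-k-robots}), each robot traces a spiral-like path inside its wedge. Wedges whose mass straddles an axis are stitched across quadrants exactly as in the $k$-not-divisible-by-$4$ construction. By the same bookkeeping as in the proof of Theorem~\ref{t:upper}, the extra work incurred when crossing from ball $n$ to ball $n+1$ is $O(1)$ per ball, contributing only a lower-order term to the asymptotic ratio.

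Next, I would extend the lower bound. The counting argument of Theorem~\ref{t:lower} adapts essentially verbatim: any strategy must visit $2n^2 + O(n)$ points inside the $n$-ball, so the total distance traversed by all robots combined is at least that quantity, and hence the total elapsed time is at least $(2n^2 + O(n))/V$. The offline optimum, which sends the fastest robot straight to a target at distance $n$, takes time $n/v_{\max}$, so the time-competitive ratio is at least $(2n+O(1))\,v_{\max}/V$. For the equal-speed specialization $v_i = 1$, one has $V = k$ and $v_{\max}=1$, recovering the bound of Theorem~\ref{t:lower}, and the fair-work strategy above matches the new bound up to the same $O(1)$ constant as in Theorem~\ref{t:upper}.

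The main obstacle will be handling the integrality of the arc partition when $v_i/V$ is irrational, or even merely not a multiple of $1/(4n)$: robot $i$'s share of the $n$-ball is in general not an integer number of lattice points, and the rounding used on ball $n$ will differ slightly from that used on ball $n+1$. My plan is to absorb the resulting slack into the inter-ball transition cost, charging at most one detour of length $O(n)$ per ball per robot, exactly as the diameter-of-the-current-ball argument used in the preceding late-arrival theorem. This leaves the asymptotic ratio unchanged at $(2n + O(1))\,v_{\max}/V$, matching the lower bound and establishing optimality.
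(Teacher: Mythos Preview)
Your core idea matches the paper's: allocate work to each robot in proportion to its speed so that all robots finish each ball simultaneously. The paper's execution is more streamlined, however. Rather than assigning fractional wedges of mass $v_i/V$ and then handling per-ball rounding, the paper first scales and rounds the speeds to integers $s_1,\ldots,s_k$, sets $s=\sum_i s_i$, invokes the already-established strategy for $4s$ equal-speed robots, and assigns $4s_i$ contiguous regions to robot $i$; optimality is then inherited directly from Theorem~\ref{thm:any-k-robots} with no further rounding analysis, and the irrationality obstacle you flag simply disappears. Your route, by contrast, supplies what the paper omits---an explicit time-based lower bound $(2n+O(1))\,v_{\max}/V$ and the matching ratio for the strategy---so it is more informative. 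One caution about your final paragraph: charging ``one detour of length $O(n)$ per ball per robot'' is too generous; summed over balls $1,\dots,n$ that would add $\Theta(n^2)$ extra distance per robot and destroy the leading constant you are trying to match. The actual drift in a wedge boundary between consecutive balls is $O(1)$ lattice points, so the correct charge is $O(1)$ per ball per robot, which is genuinely lower order. With that fix your argument goes through, but the paper's integer-scaling reduction avoids the issue entirely.
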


\begin{proof}
Suppose we are given $k$ robots with varying speeds.
Let the speed of the $k$ robots be $s_1, s_2,\ldots, s_k$ respectively.
We can consider the speeds to be integral, subject to proper scaling and rounding.
Let $s = \sum_{i=1}^k s_i$. We use the strategy for $4s$ robots
and we assign for each robot respectively: $4s_1, 4s_2, ..., 4s_k$ regions.
It follows that every robot completes the exploration of its region at the
same time as any other robot since the difference in area explored
corresponds exactly to the difference in search speed and the search
proceeds uniformly and optimally over the entire range as well.
\qed
\end{proof}

\subsection{Probability of detection}

In real life settings there is a substantial probability that the
search agent might miss the target even after exploring the immediate
vicinity of the target. Indeed, in searches on high or stormy seas often
multiple passes must be made before a man overboard is located
and rescued. In this case, the search vessel
uses a nautical pattern resembling a clover and is known as sector search.
(See~\cite{Lop:thesis,sarscene} for a robotics perspective of sector searches
and other SAR techniques).

The search-and-rescue literature provides ready tables of
probability of detection (POD) under various search conditions \cite{IMO1,IMO2,IMO3}.
Fig.~\ref{heatmap} shows the initial probability map for a typical man overboard event.
Fig.~\ref{probabilities} shows the probability of detection as a function
of the width of the search area spanned. The unit search width magnitude is
computed using location, time, target and search-agent specific information such as
visibility, lighting conditions, size of target and height of search vessel.
We consider then a setting in which a suitable POD distribution
has been computed taking into account present visibility conditions and
size of target (see Fig. \ref{heatmap}).
Armed with this information, a robot
must then make a choice between searching an unexplored
cell in the lattice or revisiting a previously explored cell.

\begin{figure}[h]\centering
 \includegraphics[width=0.45\textwidth]{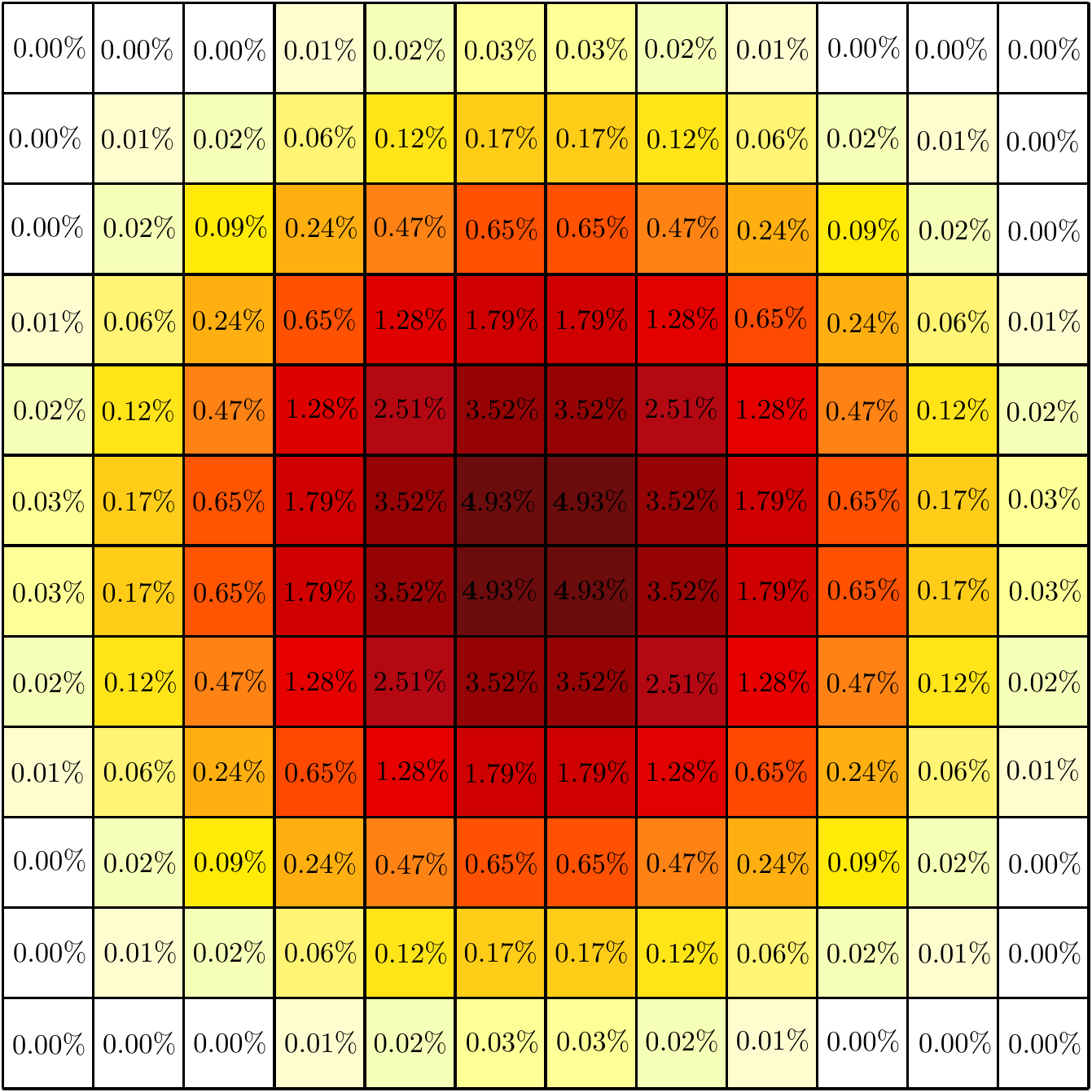}
\caption{Initial probability map \cite{IMO2}.} \label{heatmap}
 \end{figure}

Consider first a model in which a robot can ``teleport'' from any
given cell to another, ignoring any costs of movement related
to this switch.
The greedy strategy consists of robots moving to the cell with current highest
probability of containing the target. Each cell is then
searched using the corresponding pattern for the number of robots deployed in the cell.

\begin{lemma}\label{lem:teleport}
Greedy is the optimal strategy for searching a probabilistic space under the teleport model.
\end{lemma}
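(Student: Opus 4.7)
The plan is to prove the lemma by a standard exchange argument on the schedule of unit-cost cell searches. First I would formalize the teleport model: because movement is free, any strategy is simply an infinite sequence $(c_1, c_2, \ldots)$ of cells to search, and the cost/time elapsed at step $t$ is exactly $t$. The natural objective is to minimize the expected detection time $E[T] = \sum_{t \geq 1} t \cdot q_t$, where $q_t$ is the probability that the target is first detected on step $t$. After each unsuccessful search of a cell $c$ with current posterior probability $p_c$ and per-look detection probability $d_c$, Bayes' rule updates $p_c$ by multiplying by $1-d_c$ and renormalizing by $Z = 1 - p_c d_c$; crucially, the posteriors of all other cells are rescaled by the common factor $1/Z$, so their relative ranking is preserved across a single search.

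The core step is the exchange argument. Let $S$ be any strategy that deviates from greedy and let $t$ be the earliest step at which $S$ searches a cell $c$ whose current detection score $p_c d_c$ is strictly smaller than that of the greedy choice $c^{\ast}$. Let $t' > t$ be the first step at which $S$ searches $c^{\ast}$; if no such step exists, appending a search of $c^{\ast}$ strictly decreases $E[T]$ and contradicts the assumed optimality of $S$. Form $S'$ by swapping the searches at steps $t$ and $t'$. Because the outcomes of searches on distinct cells are conditionally independent given the true target location, the joint posterior after any prefix of searches depends only on the multiset of cells searched (all unsuccessful), not on their order. Hence $S'$ agrees with $S$ at every step outside $\{t,t'\}$, and the contribution to $E[T]$ from the two swapped steps decreases strictly when the higher-probability detection event is placed earlier. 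Iterating this swap transforms $S$ into the greedy strategy without ever increasing $E[T]$, which establishes optimality.

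The main obstacle is nailing down this order-independence of the posterior, so that swapping two searches really does not change the branch of the strategy that is executed on the remaining $s \notin \{t,t'\}$ steps. Once commutativity of searches on distinct cells is in hand, the remaining calculation reduces to a one-line comparison showing that scheduling a more-probable detection event earlier lowers the expected detection time, completing the argument.
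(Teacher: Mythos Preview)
Your approach is the same exchange argument the paper sketches, but the paper short-circuits all the Bayesian bookkeeping: it works directly with the unconditional numbers $p^j_i$ (the probability of discovering the target in cell $i$ on its $j$th visit), notes that a schedule is just an assignment of a time to each pair $(i,j)$, and invokes the standard fact that $\sum_t t\,p_t$ is minimized when the $p_t$ are arranged in decreasing order. Since $p^j_i$ is automatically decreasing in $j$, the precedence constraint ``visit $j$ before visit $j{+}1$'' is respected for free, and no posteriors ever enter.

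Your more detailed version has a small but real slip in the swap step. After exchanging the searches at $t$ and $t'$, the claim that ``$S'$ agrees with $S$ at every step outside $\{t,t'\}$'' holds for the \emph{cells searched} (by construction) but not for the \emph{detection probabilities}: if some intermediate step $s\in(t,t')$ searches $c$ again, then the visit count of $c$ at step $s$ differs between $S$ and $S'$, so $q_s$ changes too, and more than two terms of $E[T]$ move. Your ``order-independence of the posterior'' observation handles steps $s>t'$ correctly (same multiset of prior searches) but does not cover these intermediate steps. The easy fix is to swap \emph{adjacent} out-of-order steps, where there are no intermediate steps to worry about, or equivalently to adopt the paper's unconditional-$p^j_i$ viewpoint, in which a single transposition really does leave every other term of the sum untouched.
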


\begin{proof}
Let $p^j_i$ be the probability of discovering the target in cell $i$ during the $j$ visit, sorted in decreasing order.
We now relabel them $p_1, p_2, \ldots$ The expected time of discovery is $\sum_{t=1}^{\infty} p_t\cdot t$ which is minimized
when $p_t$ are in decreasing order, which can be shown formally via a standard
greedy technique proof.
\qed
\end{proof}

While we introduce the teleportation model as a means of understanding the
complexity of the problem, observe that in real life robots could potentially
be redeployed using means such as air lifting with orders of magnitude faster
traveling speeds than surface searching vessels. As such, the teleportation
cost would be a lower order term though never strictly zero.

In what follows we study more carefully the case where moving from one search position
to another happens at the same speed as searching and hence, transit time should be
accounted for in the algorithm.
\begin{figure}[h]\centering
 \includegraphics[width=0.58\textwidth]{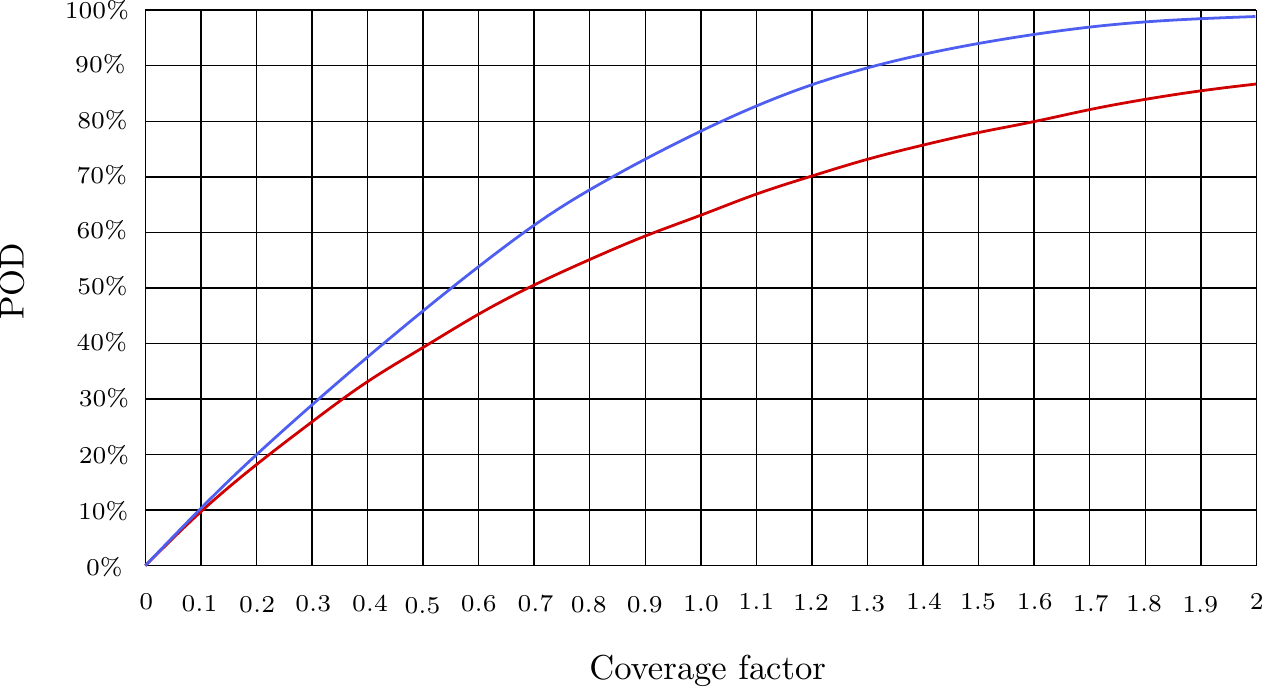}
\caption{Average probabilities of detection (POD) over an area for visual searches using parallel sweeps (blue line: ideal search conditions,
red line: normal search conditions)~\cite{IMO2}.} \label{probabilities}
 \end{figure}
Observe that the probability of each cell evolves over time.
It remains at its initial value
so long as it is still unexplored and it becomes
$(1-p)^m$ times its initial value after $m$ search passes, where $1-p$ is the probability of
not detecting a target present in the current cell during a single search pass.

\subsubsection{High Level Description of the Search Strategy.}
In real life, of course, there is a cost associated with moving from a cell to another.
In this case, the algorithm creates \emph{supercells} of size $h\times h$ unit cells, for some value of $h$, which
depends on the total number of robots available to the search effort.

The algorithm computes the combined probability of the target being found in a supercell which corresponds
to the sum of the individual probabilities of the unit cells as given by the POD map.

At each time $t$, the algorithm considers the highest probability key supercell 
and compares it to the lowest probability key supercell being explored to determine if a robot transfer should
take place. If so, it updates the probabilities of discovery accordingly. The search process continues
ad infinitum or until the probability of finding the target falls below a certain
threshold, in which case we consider that the object is irretrievably lost.

The process above is used to compute the number of robots gained/lost by each
supercell. Once the probabilities have been rebalanced, we need to determine the source/destination
pair for each robot. This is important since the distance between source and
destination is dead search time, so we wish to minimize the amount of transit
time. To this end, we establish a minimum-cost network flow~\cite{comb-opt}
that computes
the lowest total transit cost robot reassignment that satisfies
the computed gains and losses.

\subsubsection{Probabilistic Search Algorithm.}

More formally, let $C^t_1, C^t_2,\ldots, C^t_j$ be the areas being explored at time $t$
by $r^t_1, r^t_2,\ldots, r^t_j$ robots respectively for $r_i \geq 0$. The combined
probability of a supercell is the sum of the probabilities of the cells inside it.

These combined probabilities are then sorted in decreasing order and the algorithm
dispatches robots to the highest probability supercell until the marginal value of the
robots is below that of an unexplored supercell. More precisely, let $C_i$ and $C_j$ be
the two supercells of highest combined probability, $p_i$ and $p_j$, respectively. The algorithm then assigns
$s$ robots to supercell $C_i$ such that
\begin{equation}
p_1/s\geq p_2 > p_1/(s+1). \label{eq:stasys}
\end{equation}
The algorithm
similarly assigns robots to $C_j$, $C_m$, and so on, updating the
discovery probability per robot for each supercell. Observe that if more than
one robot is assigned to $C_j$, the algorithm might need to increase the
assignment of robots to $C_i$ so that Ineq. \ref{eq:stasys} is
re-established.

This process continues until all $k$ robots have been allocated, and the search begins.
More specifically, the algorithm maintains two priority queues.
One is a max priority queue (PQ) of supercells using the combined probability per robot as key.
That is, supercell $i$ appears with priority key equals to $p_i/(r_i+1)$ where $r_i$ is the present
number of robots assigned to it by the algorithm.
The other is a min PQ of supercells presently being explored with the residual probability $p_i/r_i$ of each as key.

The algorithm then compares the top element in the maxPQ with the top element in the minPQ. If the
probability of the maxPQ is larger than the minPQ it adds an additional robot to the
maxPQ node, and decrements its key with updated priority. Similarly, the minPQ node losses a
robot and its priority is incremented due to the loss of one robot.


As the cells within a supercell are explored, the probability that the target is
contained in the supercell is updated according to the probability of missing the
target in an explored cell.

When the probability of the least likely supercell currently
being explored falls below that of a supercell in the maxPQ we transfer a robot from
the present supercell to the maxPQ supercell. The minPQ supercell gets an increased probability
while the maxPQ supercell gets a decreased probability resulting from the respective
decrease/increase in the number of robots in the denominator. The keys are updated
accordingly in both priority queues and the algorithm continues transferring robots
from minPQ supercells to maxPQ supercells. The algorithm however, does not remove the last
robot from a supercell until all cells within it have been explored at least once.

\begin{figure}[h]\centering
 \includegraphics[width=0.6\textwidth]{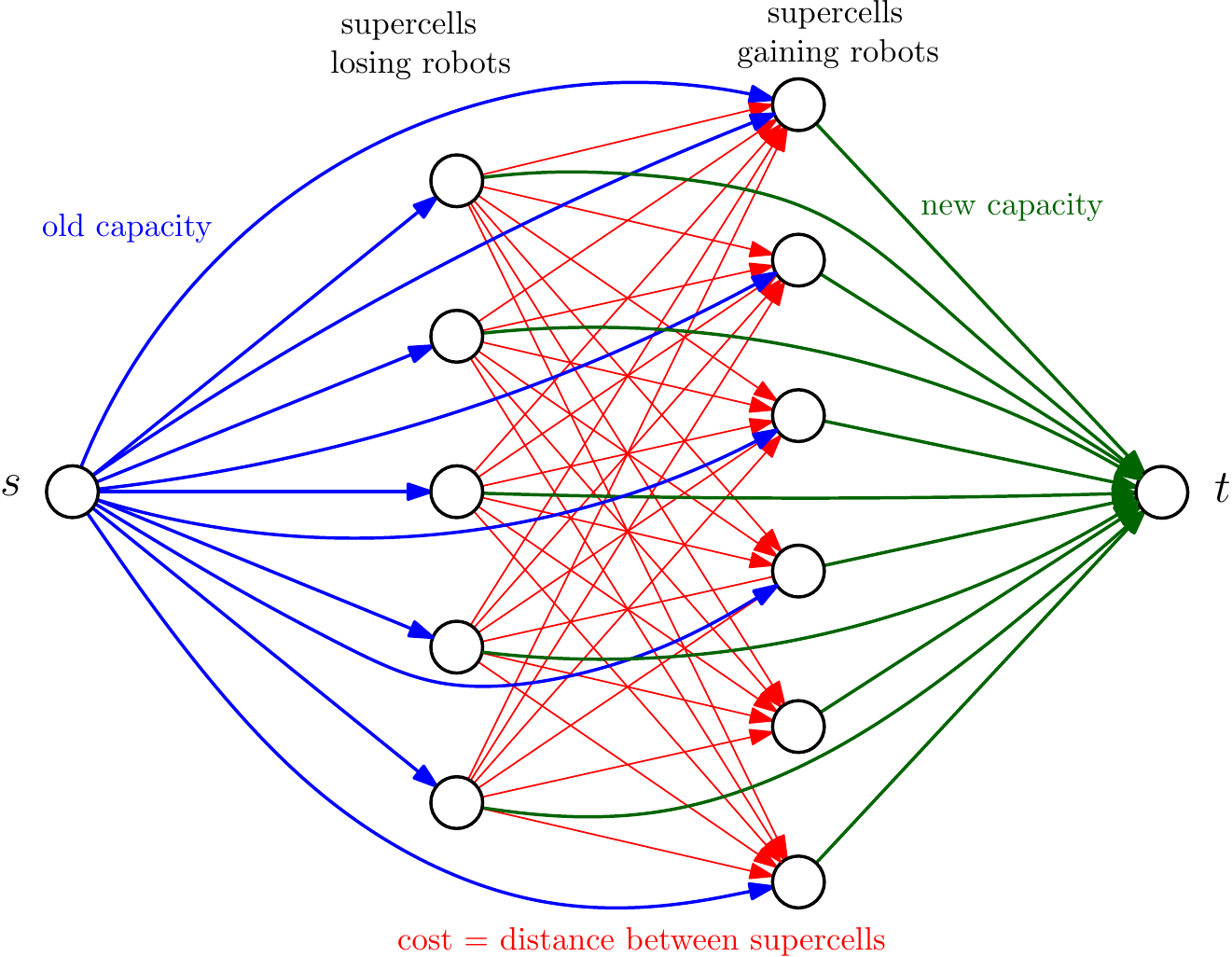}
\caption{Optimal robot reassignment via minimum cost network flow.} \label{min-flow}
 \end{figure}

Once the algorithm has computed the number of robots gained/lost by each
supercell, it establishes a minimum cost network flow problem to compute
the lowest total transit cost robot-reassignment schedule that satisfies
the computed gains and losses.

This is modelled as a network flow in a complete bipartite graph (see Fig.~\ref{min-flow}). In this
graph, nodes
on the left side of the bipartite graph correspond to supercells losing robots,
while nodes on the right correspond to supercells gaining robots.
Every node (both losing and gaining) has an incoming arc from the
source node with capacity equal to the old number of robots in the associated
supercell and cost zero. Similarly, all nodes are
connected to the sink with an edge of capacity exactly equal to the
updated robot count of the associated supercell and cost zero as well. Lastly, the cross edges
in the bipartite graph have infinite capacity and cost equal to the distance
between the supercells represented by the end points.

From the construction it follows that the only way to satisfy the constraints is to
reassign the robots from the losing supercell nodes to the gaining supercell nodes
at minimum travel cost.
This network flow problem can be solved in $O(E^2)$ time using the algorithm
of Orlin \cite{orlin}. In this case $E=O(n^2)$ and hence, in the
worst case the minimum cost network flow algorithm runs in time $O(n^4)$ where $n$ is the number of
supercells.

We summarize this algorithm as the following theorem:

\begin{theorem}
The probabilistically weighted distributed scheduling strategy for the time interval $[0,t]$ can be computed
in $O(t\, n^4)$ steps, where $n$ is the size of the search grid.
\end{theorem}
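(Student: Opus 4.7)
The plan is to account for the work done at each of the $t$ discrete time steps, and show the dominant per-step cost is the minimum cost network flow computation, which runs in $O(n^4)$ time.

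First, I would formalize a single time step of the algorithm into its three functional phases: (i) update the per-cell probabilities for the cells just explored by the robots presently deployed; (ii) rebalance the maxPQ/minPQ so that Inequality~\ref{eq:stasys} is re-established, yielding the new target counts of robots for each supercell; and (iii) given the gains/losses per supercell, compute the actual physical reassignment of robots via the minimum cost network flow of Fig.~\ref{min-flow}. I would then bound each phase separately and argue the sum is $O(n^4)$ per step.

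For phase (i), since only one cell per active robot is freshly explored, the update touches at most $k$ cells, each triggering at most one priority update in the two priority queues at cost $O(\log n)$; in total $O(k \log n) = O(n^2 \log n)$ in the worst case where $k$ is bounded by the grid size. For phase (ii), every robot reassignment corresponds to a constant number of PQ operations (a decrement in maxPQ and an increment in minPQ), and the number of robots moved per step is bounded by $k = O(n^2)$, so this phase costs $O(n^2 \log n)$ as well. For phase (iii), the bipartite graph has $O(n^2)$ supercell nodes and therefore $E = O(n^4)$ edges; invoking Orlin's algorithm~\cite{orlin} at $O(E^2)$ yields $O(n^8)$ in the raw bound, but the relevant parameter in the theorem's statement is the grid size $n$ meaning number of supercells, under which $E=O(n^2)$ and Orlin's algorithm runs in $O(n^4)$, matching the count given in the paragraph preceding the theorem. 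Summing the three phases, one step costs $O(n^4)$ and multiplying by the $t$ time steps yields the claimed $O(t\,n^4)$ bound.

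The main obstacle I expect is the bookkeeping around what $n$ actually denotes: the statement calls it the size of the search grid, but the preceding discussion uses $n$ as the number of supercells inside the minimum cost flow argument, so I would explicitly reconcile these (noting that supercells partition the grid and that the dominant term is the flow computation, which governs the final expression). A secondary subtlety is arguing that phase (ii) terminates within a step, since adding a robot to one supercell can break Inequality~\ref{eq:stasys} for another; here I would use the standard monotonicity argument that each transfer strictly decreases the potential $\sum_i p_i/r_i$ among explored supercells, so the rebalancing loop terminates in at most $O(k)$ iterations and does not dominate the flow cost. Once these points are settled, the overall bound follows immediately by the per-step analysis multiplied by $t$.
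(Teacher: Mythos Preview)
Your proposal is correct and follows the same approach as the paper: the paper does not give a separate proof but simply summarizes the preceding discussion, whose core is that the per-step cost is dominated by the minimum cost flow computation, solved by Orlin's algorithm in $O(E^2)=O(n^4)$ with $E=O(n^2)$ and $n$ the number of supercells, and then multiplied by $t$. Your three-phase decomposition is more explicit than anything in the paper, and your flagged ambiguity between ``size of the search grid'' and ``number of supercells'' is a genuine inconsistency in the paper's own statement rather than a gap in your argument.
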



\REMOVED{
Observe that there is a choice
as to how many robots to redeploy from the present explored position
to the next highest probability supercell or supercells. The optimal value can be obtained via
an iterative argument where we compute the probability of the cells explored
if we send one robot to the high probability cell and the rest remain in the
present expanding pattern. If the probability of the central area combined
exceeds that of the combined expanding search area it follows that the
strategy only improves if we send two robots towards the remote
high probability area. This process continues, considering 3 robots, then
4 and so on. Optimality is achieved at the point of equilibrium when
the probability of the lowest probability cells explored by the
robots changing cell are equal to the lowest probability cells explored
by the robots continuing on the expanding search pattern of Fig. \ref{7-rob}.
}

\subsection{Time to discovery}

There are several parameters of a SAR cost. First is the total cost of
the search effort as measured in vessel and personnel hours times the
number of hours in the search. The second is the effectiveness of the
search in terms of the probability of finding the target. Lastly,
the time to discovery or speed-to-destination as time is of the
essence in most search rescue scenarios. That is to say, a multiple
robot search is preferable to a single
agent search with the same cost and probability of detection as
time to discovery is lower. Observe as well that multiple robots
allow for higher coverage of the unit search width, dramatically increasing
the probability of detection. The strategies presented in this paper
suggest that robot swarm searches outperform traditional searches in
all three of these parameters.

\section{Conclusion}

We present optimal strategies for robot swarm searches under both idealized and
realistic considerations.
The search strategies are based on a theoretical search primitive which is then
enriched with realistic considerations. Interestingly, the theoretical model
is resilient to these assumptions and can be readily adapted to take them
into consideration. We give pseudo-code showing that the search primitives
are simple and can easily be implemented with minimal computational and
navigational capabilities. We then give a heuristic to account for the probability
of detection map often available in real life searches.
The strategies proposed have a factor of $k$ improved time to discovery as compared
to a single searcher for the same total travel effort.



\newpage
\section{Appendix}
Let $k = 4r$ be the number of robots searching in parallel starting from a common origin.
The following pseudo-code describes the algorithm of parallel search using $r$ robots for the first quadrant.
A simple rotation applied to the code gives the search strategy for the other three quadrants, which we
omit for reasons of clarity.
\begin{algorithm}
\caption{Strategy$(r,n)$}
\begin{algorithmic}
  \STATE {\bf Input:} Let $k=4r$ the number of robots, and let $n$ be the covered distance.
  \STATE {\bf Output:} parallel search strategy of $r$ robots in a quadrant.
        \STATE Robot-1($n$).
        \FOR{$i=2$ \TO $r-1$ }
            \STATE Middle-robots($i,\;n$).
        \ENDFOR
        \STATE Robot-r($n$).
\end{algorithmic}
\label{alg:Strategy}
\end{algorithm}

\begin{algorithm}
\caption{Robot-1($n$)}
\begin{algorithmic}
  \STATE {\bf Input:} Let $k=4r$ the total number of robots and let $n$ be the covered distance.
  \STATE {\bf Output:} The parallel search strategy of the first robot in a quadrant.
    \STATE Initialization($r$, 0);
     \FOR{$v=1$ \TO $n$} 
         \FOR{$j=1$ \TO $2(r-1)$}
            \STATE Stairs($8(v-1)+3$, $horizontal$, $NW$). 
            \STATE Stairs($8(v-1)+5$, $horizontal$, $SE$). 
        \ENDFOR
         \FOR{$j=1$ \TO $2$}
           \STATE Stairs($4j + 8(v-1)$, $horizontal$, $NW$).
           \STATE Stairs($4j+2 + 8(v-1)$, $vertical$, $SE$).
        \ENDFOR
    \ENDFOR
\end{algorithmic}
\label{alg:Robot-1}
\end{algorithm}

\begin{algorithm}
\caption{Initialization($x,y$)}
\begin{algorithmic}
  \STATE {\bf Input:} Let $(x,y)$ be the initial starting point.
  \STATE {\bf Output:} Constructs the initial pattern for a robot
    \STATE 2 up.
    \STATE 1 right.
    \STATE 2 down.
    \STATE 3 right.
\end{algorithmic}
\label{alg:Init}
\end{algorithm}

\begin{algorithm}
\caption{Stairs($n$, $d$, $direction$)} 
\begin{algorithmic}
  \STATE {\bf Input:} Let $n$ be the number of steps in the stair, $d$ - the initial horizontal or vertical step and $direction$ either $NW$ for North-West
  or $SE$ for South-East.
  \STATE {\bf Output:} The stairs in direction $direction$ starting with the first step $d$.
   \IF{$direction = NW$}
        \STATE 1 up.
        \STATE Init-Stair($n$, $d$, $NW$).
        \STATE 1 up.
  \ELSE
        \STATE 1 right.
        \STATE Init-Stair($n$, $d$, $SE$).
        \STATE 1 right.
  \ENDIF
\end{algorithmic}
\label{alg:Stairs}
\end{algorithm}

\begin{algorithm}
\caption{Init-Stair($n$, $d$, $direction$)}
\begin{algorithmic}
  \STATE {\bf Input:} Let $n$ be the number of steps in the stair, $d$ - the initial horizontal or vertical step and $direction$ either $NW$ for North-West
  or $SE$ for South-East.
  \STATE {\bf Output:} The $n$ stairs in direction $direction$ starting with the first step $d$.
     \IF{$n>1$ }
        \IF{$d=horizontal$ }
             \IF{$direction = NW$}
                \STATE  2 left.
            \ELSE
                \STATE 2 right.
            \ENDIF
            \STATE Init-Stair($n-1$, $vertical$, $direction$).
        \ELSE
             \IF {$direction = NW$}
                \STATE  2 up.
             \ELSE
                \STATE 2 down.
            \ENDIF
            \STATE Init-Stair($n-1$, $horizontal$, $direction$).
        \ENDIF
    \ENDIF
\end{algorithmic}
\label{alg:Init-Stair}
\end{algorithm}

\begin{algorithm}[H]
\caption{Middle-robots($i,\;n$)}
\begin{algorithmic}
  \STATE {\bf Input:} Let $k=4r$ the total number of robots, $i$ - the number of the current robot and let $n$ be the covered distance.
  \STATE {\bf Output:} The parallel search strategy of $r-2$ (middle) robots in a quadrant.
    \STATE Initialization($r-i+1, 5*(i-1)$). 
    \FOR{$v=1$ \TO $n$ }  
         \FOR{$j=1$ \TO $2(r-i)$ }     
            \STATE Stairs($3+8(v-1)$, $horizontal$, $NW$). 
            \STATE Stairs($5+8(v-1)$, $horizontal$, $SE$). 
         \ENDFOR
        \STATE Stairs($4+8(v-1)$, $horizontal$, $NW$). 
        \STATE Stairs($6+8(v-1)$, $vertical$, $SE$). 
        \STATE Stairs($8+8(v-1)$, $horizontal$, $NW$). 
         \FOR{$j=1$ \TO 2(i-1) }
            \STATE Stairs($7+8(v-1)$, $vertical$, $SE$). 
            \STATE Stairs($9+8(v-1)$, $vertical$, $NW$). 
        \ENDFOR
        \STATE Stairs($10+8(v-1)$, $vertical$, $SE$). 
    \ENDFOR
\end{algorithmic}
\label{alg:Middle}
\end{algorithm}

\begin{algorithm}
\caption{Robot-r($n$)}
\begin{algorithmic}
  \STATE {\bf Input:} Let $k=4r$ the total number of robots and let $n$ be the covered distance.
  \STATE {\bf Output:} The parallel search strategy of the $r$th robot in a quadrant.
    \STATE Initialization(1, $5(r-1)$);\\
        \STATE Stairs($8(v-1)+4$, $horizontal$, $NW$).
        \STATE Stairs($8(v-1)+6$, $vertical$, $SE$).
        \STATE Stairs($8(v-1)+8$, $horizontal$, $NW$).
         \FOR{$j=1$ \TO $2(r-1)$}
            \STATE Stairs($8(v-1)+7$, $vertical$, $SE$). 
            \STATE Stairs($8(v-1)+9$, $vertical$, $NW$). 
        \ENDFOR
    \FOR{$v=2$ \TO $n$} 
         \FOR{$k=1$ \TO $2$}
            \STATE Stairs($8(v-1)+4k-2$, $vertical$, $SE$).
            \STATE Stairs($8(v-1)+4k$, $horizontal$, $NW$).
        \ENDFOR
         \FOR{$j=1$ \TO $2(r-1)$}
            \STATE Stairs($8(v-1)+7$, $vertical$, $SE$). 
            \STATE Stairs($8(v-1)+9$, $vertical$, $NW$). 
        \ENDFOR
    \ENDFOR
\end{algorithmic}
\label{alg:Robot-p}
\end{algorithm}

\end{document}